\newtheorem{proposition}{Proposition}
\theoremstyle{definition}
\title{\textbf{One Simple Trick to Fix Your Bayesian Neural Network}}
\author[1,2,3]{Piotr Tempczyk}
\author[2,4]{Ksawery Smoczyński}
\author[1]{Philip Smolenski-Jensen}
\author[1,5]{Marek Cygan}
\affil[1]{%
Institute of Informatics, University of Warsaw
}
\affil[2]{%
Polish National Institute for Machine Learning (\url{opium.sh})
}
\affil[3]{%
\url{deeptale.ai}
}
\affil[4]{Faculty of Mathematics and Computer Science, Adam Mickiewicz University}
\affil[5]{%
Nomagic
}
\begin{document}
\maketitle

\begin{abstract}
One of the most popular estimation methods in Bayesian neural networks (BNN)\citep{blundell2015weight} is mean-field variational inference (MFVI)\citep{blei2017variational}.
In this work, we show that neural networks with ReLU\citep{fukushima1975cognitron} activation function induce posteriors, that are hard to fit with MFVI.
We provide a theoretical justification for this phenomenon, study it empirically, and report the results of a series of experiments to investigate the effect of activation function on the calibration of BNNs.
We find that using Leaky ReLU activations\citep{maas2013rectifier} leads to more Gaussian-like weight posteriors and achieves a lower expected calibration error (ECE)\citep{guo2017calibration} than its ReLU-based counterpart.
\end{abstract}

\section{Introduction}

Uncertainty estimation and neural network calibration are important aspects of real-world deep learning applications\citep{wang2019aleatoric,lakshminarayanan2017simple,loquercio2020general}. Especially, when dealing with out-of-distribution or noisy examples, which is often the case in robotics or autonomous driving. Currently ensemble methods provide best approaches to uncertainty estimation and are often superior to Bayesian methods, especially to variational inference, which is the fastest one (e.g. compared to MCMC), but also the most inaccurate one. This paper makes approach to change current state of knowledge and make MFVI methods more scalable and accurate in terms of uncertainty estimation.

We start with observing that for neural networks with  ReLU activation every parameter $w_i$ has an unbounded range of values for which the likelihood function $\mathcal{L}_n$ (for a single data point $x_n$, $n=1\dots N$) is constant and greater than 0. %

\begin{proposition}
\label{prop:likelihood}
For every neuron with a ReLU activation function and non-negative input with weight $w_i$ there exists upper bound $w_{ni}^*$, such that if $w_i \leq w_{ni}^{*}$ then $\frac{\partial\mathcal{L}_n}{\partial w_i} = 0$.
\end{proposition}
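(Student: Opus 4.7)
The plan is to exploit the fact that for a ReLU neuron, whenever the pre-activation is non-positive, both the activation and its (sub)gradient with respect to every upstream weight vanish, so the chain rule forces $\partial\mathcal{L}_n/\partial w_i$ to be zero regardless of how $\mathcal{L}_n$ depends on the downstream layers.

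First I would fix the data point $x_n$ and isolate a single ReLU neuron that uses $w_i$, writing its pre-activation as
\[
z_n(w_i) \;=\; w_i x_{ni} + \sum_{j\neq i} w_j x_{nj} + b,
\]
where $x_{ni}\ge 0$ is the $i$-th component of the (non-negative) input to this neuron. Its output is $a_n = \max(0, z_n(w_i))$. Since $x_{ni}\ge 0$, the pre-activation $z_n$ is a non-decreasing affine function of $w_i$. Thus there is a well-defined threshold
\[
w_{ni}^{*} \;=\; \begin{cases}
-\dfrac{\sum_{j\neq i} w_j x_{nj} + b}{x_{ni}}, & x_{ni} > 0, \\[4pt]
+\infty, & x_{ni} = 0,
\end{cases}
\]
such that $w_i \le w_{ni}^{*}$ implies $z_n(w_i)\le 0$, hence $a_n = 0$ on an open neighbourhood to the left of $w_{ni}^{*}$.

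Next I would apply the chain rule: $\partial\mathcal{L}_n/\partial w_i = (\partial\mathcal{L}_n/\partial a_n)\cdot(\partial a_n/\partial w_i)$. On the region $w_i < w_{ni}^{*}$, the ReLU is locally the zero function, so $\partial a_n/\partial w_i = 0$ and the whole product vanishes. The case $x_{ni}=0$ is trivial since $w_i$ simply does not enter $z_n$. This covers the strict inequality; the boundary point $w_i = w_{ni}^{*}$ requires a brief remark about the ReLU subgradient at zero, which is conventionally taken to be $0$, so the claimed inequality $w_i\le w_{ni}^{*}$ holds as stated.

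The only subtle step is making sure that the argument does not depend on what the rest of the network does downstream of this neuron: since $a_n$ is identically zero throughout a neighbourhood, any smooth or piecewise-smooth function of $a_n$ composed further up the network still has a vanishing partial derivative with respect to $w_i$ there. I expect the main (minor) obstacle to be stating this cleanly without dragging in full network notation, and handling the non-smooth kink of ReLU at the boundary; both can be dispatched with a one-line subgradient remark.
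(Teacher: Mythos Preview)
Your argument is correct and follows essentially the same route as the paper: apply the chain rule through the ReLU, observe that the pre-activation is an affine, non-decreasing function of $w_i$ (since the input is non-negative), and set the threshold $w_{ni}^{*}$ so that the pre-activation is non-positive, making the ReLU derivative vanish. Your treatment is in fact slightly more careful than the paper's (explicit $+\infty$ threshold when the input component is zero, and the subgradient remark at the kink), but the decomposition and key idea are identical.
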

\begin{proof}
We can decompose this derivative using chain rule into:
$\frac{\partial\mathcal{L}_n}{\partial w_i} = \frac{\partial\mathcal{L}_n}{\partial z}\frac{\partial z}{\partial a}\frac{\partial a}{\partial w_i},$
where $a = w_i h_i + \sum_{j \neq i} w_j h_j + b$ is a selected neuron output, $z=\max(0, a)$ is a ReLU activation function and $h_k$ is $k$-th input to the neuron.
$\frac{\partial\mathcal{L}}{\partial w_i}$ vanishes if any of the terms in the chain is equal to $0$. 
Note, that if $h_i = 0$, then $\frac{\partial z}{\partial a}=0$, for any value of $w_i$.
Otherwise, if $h_i>0$ then let $w_{ni}^*\coloneqq -\frac{\sum_{j \neq i} w_j h_j + b}{h_i}$, which implies that $a \leq 0$, which in turns gives $\frac{\partial z}{\partial a}=0$.
\end{proof}

For neural networks with ReLU activations $h_i \geq 0$ holds for all the layers except the first one. For the whole dataset $\mathcal{L} = \prod_{n=1}^N\mathcal{L}_n$. This fact and Proposition~\ref{prop:likelihood} implies that there is $w_i^* = \displaystyle\min_{n}{w_{ni}^*}$, such that if $w_i \leq w_{i}^{*}$ then $\frac{\partial\mathcal{L}}{\partial w_i} = 0$. Therefore, for all the weights and any dataset, there exists an infinite plateau on the loss function in some part of the weight space. The posterior resulting from $\mathcal{L}$ might be impossible to normalize (especially for improper uniform priors) and may cause problems when fitting it with MFVI. This can be to some extent alleviated by using proper priors, but the problem does not vanish because the resulting posterior's shape may still be far from Gaussian's and hence lead to poor approximation when using MFVI. 

In this work we investigate this problem and propose a simple solution, where we change all activation functions from ReLU to LeakyReLU and optimize its \emph{negative slope} parameter. We show this leads to posteriors much more suitable for fitting with Gaussian distribution without deteriorating the accuracy.

\begin{figure}[t!]
\centering
\includegraphics[width=0.9\textwidth]{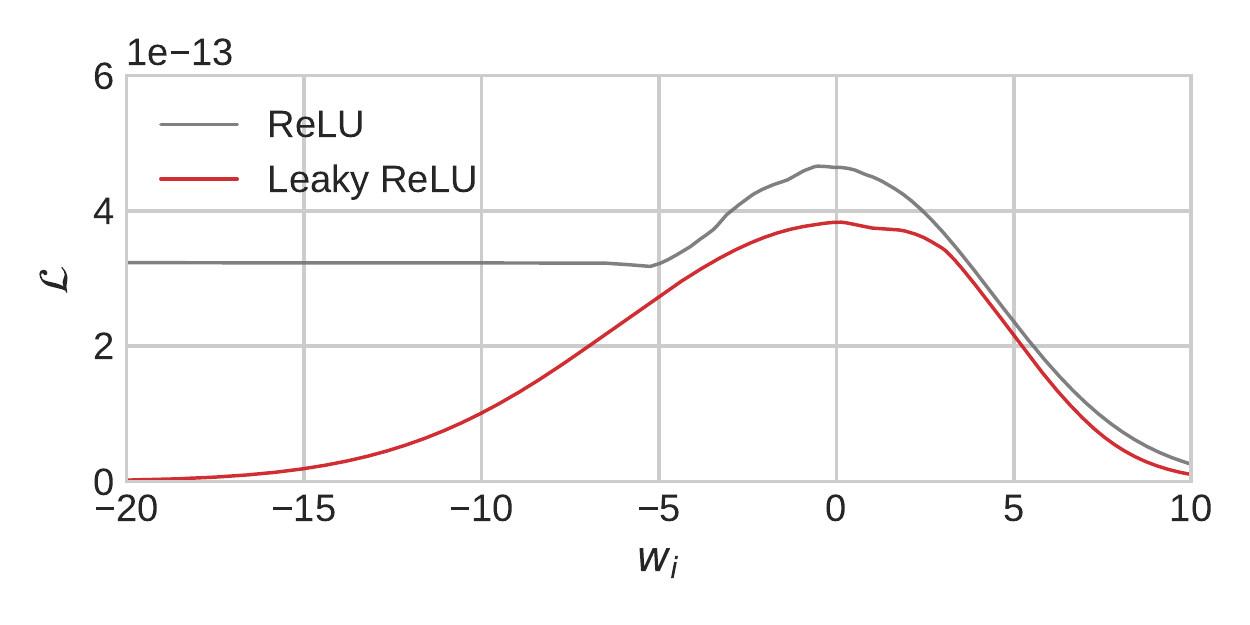}
\caption{The conditional likelihood for the weight $w_i$ from a fully connected layer from a model trained on 600 samples from MNIST with ReLU and Leaky ReLU activations.}
\label{fig:ll}
\end{figure}

\section{Experiments}

\begin{figure}[b!]
    \centering
    \includegraphics[width=0.9\textwidth]{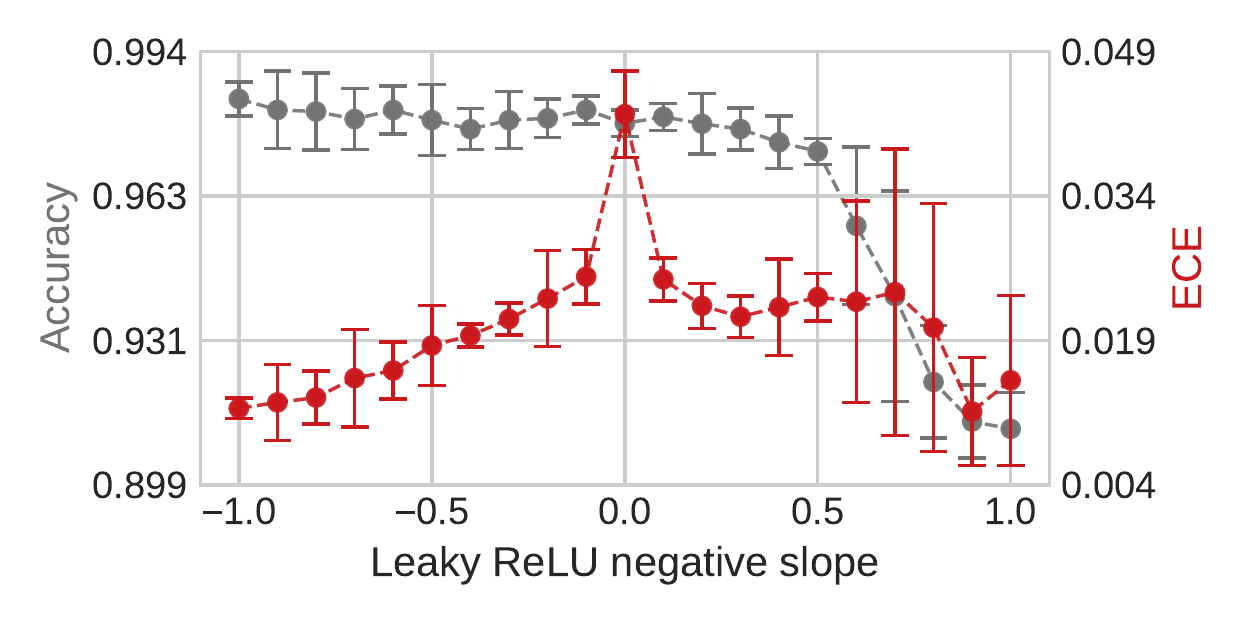}
    \caption{The impact of Leaky ReLU negative slope on ECE and Accuracy for CONV model trained on MNIST.}
    \label{fig:metrics-conv-mnist}
\end{figure}

The experiments were run with the following models:
3FC -- Fully connected neural network with 3 hidden layers of size 1000,
CONV -- convolutional neural network with 2 hidden layers containing 128 and 256 channels respectively, and MNIST\citep{lecun-mnisthandwrittendigit-2010} and Fashion MNIST (FMNIST)\citep{xiao2017fashionmnist} datasets. We compared a family of Leaky ReLU functions with negative slope parameter ranging from $-1$ to $1$. Note that, in particular, this family contains absolute value function, ReLU, and linear activation for slope values of $-1$, $0$, and $1$ respectively. 

We trained Bayesian models using Pyro\citep{bingham2019pyro} MFVI with Adam\citep{kingma2014adam} optimizer, a learning rate of $0.001$, and normally distributed weight priors of mean 0 and variance 1. The quality of uncertainty estimation was measured using Expected Calibration Error (ECE).

\subsection{Shape of the likelihood function} 

We trained deterministic models on all the architectures and datasets and visualized conditional likelihood functions for some random weights in the vicinity of the mode of the likelihood. We observed the phenomena we predicted theoretically for the majority of the weights in all considered architectures with ReLU, and for 10-30\% of them this flat region was near the mode of the distribution (as shown in Fig.~\ref{fig:ll}), thus implying it would affect fitting the posterior using MFVI.

\subsection{ECE dependence on negative slope}

We ran 5 experiments for each configuration of a model, dataset, and Leaky ReLU slope. Results for one combination are shown in Fig.~\ref{fig:metrics-conv-mnist}. In all configurations but one, the highest ECE corresponds to the slope of 0 (which is ReLU). In most cases, the maximum in ECE has a form of a spike, as in Fig.~\ref{fig:metrics-conv-mnist}. 

Moreover, there is no definite domination in terms of accuracy while using ReLU which means that using Leaky ReLU leads to better calibrated models with similar accuracy.

\subsection{Comparison of different architectures and datasets}

\begin{table}[t!]
\centering
\caption{Accuracy and ECE for different architectures, activations and datasets. Leaky stands for Leaky ReLU.}

\begin{tabular}{llrrrr}
\toprule
 & {} & \multicolumn{2}{c}{Accuracy} & \multicolumn{2}{c}{ECE} \\
 &  & Leaky & ReLU & Leaky & ReLU \\
Dataset & Model &   &  &   &  \\
\midrule
FMNIST & CONV &  \textbf{0.83} & \textbf{0.83} &  \textbf{0.014} & 0.079 \\
 & 3FC &  \textbf{0.83} & 0.82 &  \textbf{0.033} & 0.106 \\
MNIST & CONV &  \textbf{0.98} & \textbf{0.98} &  \textbf{0.018} & 0.042 \\
 & 3FC &  0.97 & \textbf{0.98} &  \textbf{0.007} & 0.019 \\
\bottomrule
\end{tabular}

\label{tab:comparison}

\end{table}

In Table~\ref{tab:comparison} we present average results for ECE and Accuracy for all models and datasets for Leaky ReLU negative slope equal $-0.5$ (almost all best results for all combinations were between $-1$ and $-0.5$). We can see that ReLU and Leaky ReLU models are comparable in terms of accuracy and Leaky ReLU models yield much better ECE.

\subsection{Decalibration for ReLU}
\begin{figure}[t!]
    \centering
    \includegraphics[width=0.9\textwidth]{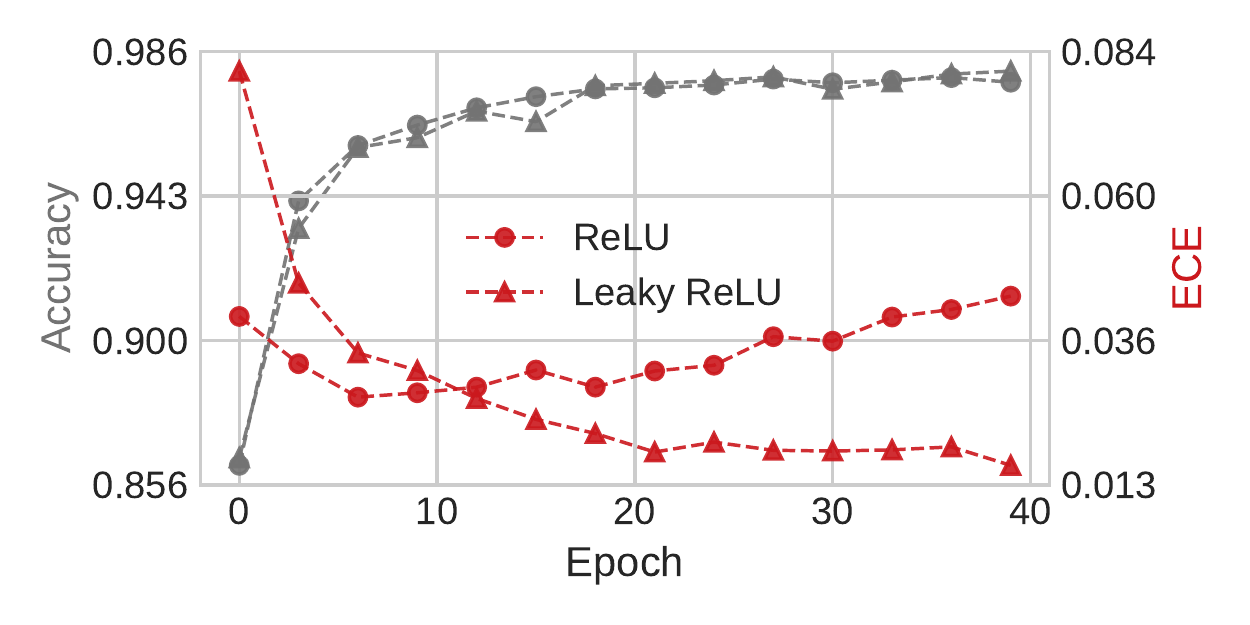}
    \caption{Decalibration during ReLU training for CONV network on MNIST dataset. We plot accuracy and ECE on the validation set and Leaky ReLU model for comparison.}
    \label{fig:decalibration-relu}
\end{figure}

We observed a phenomenon that might also be related to our observation about BNNs using ReLU activations. Figure ~\ref{fig:decalibration-relu} shows measured metrics during the training of a model. While accuracy on the validation set increases during training with both activations, the ECE for ReLU on the validation set initially drops as expected, but then it starts to increase before the model has fully converged. This behavior has been noticed in the vast majority of experiments using ReLU (and only ReLU) activation.

\section{Conclusions}

We have shown that Leaky ReLU leads to superior ECE scores with comparable accuracy in comparison with ReLU activations. We plan to conduct more experiments on larger architectures, bigger and non-image datasets, and verify if our observation holds in case of regression problems.

\bibliography{main}

\end{document}